\pgfplotsset{compat=newest}
\definecolor{color0}{rgb}{0.12156862745098,0.466666666666667,0.705882352941177}
\definecolor{color1}{rgb}{1,0.498039215686275,0.0549019607843137}
\definecolor{color2}{rgb}{0.172549019607843,0.627450980392157,0.172549019607843}
\definecolor{color3}{rgb}{0.83921568627451,0.152941176470588,0.156862745098039}
\definecolor{color4}{rgb}{0.580392156862745,0.403921568627451,0.741176470588235}
\definecolor{color5}{rgb}{0,0,0}
\def\BibTeX{{\rm B\kern-.05em{\sc i\kern-.025em b}\kern-.08em
    T\kern-.1667em\lower.7ex\hbox{E}\kern-.125emX}}
\newcommand{\todo}[1]{\textcolor{red}{#1}}
\newcommand{\N}{\ensuremath\mathbb{N}}
\newcommand{\R}{\ensuremath\mathbb{R}}
\newcommand{\C}{\ensuremath\mathbb{C}}
\newcommand{\mexp}[0]{\mathrm{e}}
\newcommand{\dt}[0]{\frac{\mathrm{d}}{\mathrm{d}t}}
\newcommand{\ds}{\,\mathrm{d}s}
\newcommand{\finalTime}{t_{\mathrm{f}}}
\newcommand{\timeInt}{\mathbb{T}}
\theoremstyle{plain}
\newtheorem{theorem}{Theorem}[section]
\newtheorem{lemma}[theorem]{Lemma}
\newtheorem{remark}[theorem]{Remark}
\newtheorem{assumption}[theorem]{Assumption}
\newtheorem{problem}[theorem]{Problem}
\begin{document}

\title{Certified machine learning: A posteriori error estimation for physics-informed neural networks
\ifthenelse{\boolean{peerReview}}{}{\thanks{The authors acknowledge funding from the DFG under Germany's Excellence Strategy -- EXC 2075 -- 390740016. Both authors are thankful for support by the Stuttgart Center for Simulation Science (SimTech).}
}}

\ifthenelse{\boolean{peerReview}}{%
	\author{\IEEEauthorblockN{Anonymous Authors}}
}{%
\author{\IEEEauthorblockN{Birgit Hillebrecht}
\IEEEauthorblockA{\textit{Stuttgart Center for Simulation Science} \\
\textit{University of Stuttgart}\\
Stuttgart, Germany \\
birgit.hillebrecht@simtech.uni-stuttgart.de\\
ORCID: 0000-0001-5361-0505}
\and
\IEEEauthorblockN{Benjamin Unger}
\IEEEauthorblockA{\textit{Stuttgart Center for Simulation Science} \\
\textit{University of Stuttgart}\\
Stuttgart, Germany \\
benjamin.unger@simtech.uni-stuttgart.de\\
ORCID: 0000-0003-4272-1079}
}
}

\maketitle

\begin{abstract}
	Physics-informed neural networks (PINNs) are one popular approach to incorporate a priori knowledge about physical systems into the learning framework. PINNs are known to be robust  for smaller training sets, derive better generalization problems, and are faster to train. In this paper, we show that using PINNs in comparison with purely data-driven neural networks is not only favorable for training performance but allows us to extract significant information on the quality of the approximated solution. Assuming that the underlying differential equation for the PINN training is an ordinary differential equation, we derive a rigorous upper limit on the PINN prediction error. This bound is applicable even for input data not included in the training phase and without any prior knowledge about the true solution. Therefore, our a posteriori error estimation is an essential step to certify the PINN. We apply our error estimator exemplarily to two academic toy problems, whereof one falls in the category of model-predictive control and thereby shows the practical use of the derived results.
\end{abstract}

\begin{IEEEkeywords}
	Machine Learning, Certification, A Posteriori Error Estimator, Inverted Pendulum, Physics-Informed Neural Network, Ordinary Differential Equation, 
\end{IEEEkeywords}

\section{Introduction}
Recently, one of the fundamental limitations of machine learning (ML) methods, namely the inability to introduce a priori knowledge about the underlying dynamical system reflected by a partial differential equation (PDE) or ordinary differential equation (ODE), has been overcome by introducing physics-informed machine learning and thus physics-informed neural networks (PINNs) \cite{KarKLPWY21,RaiPK19,LeiBHP21}. The benefit of leveraging this additional information is currently mainly limited to avoiding non-plausible results \cite{KarKLPWY21}, regularizing the method to be robust for small datasets, and improving the performance of the ML method \cite{AlbTC19}. This paper extends these benefits by deriving rigorous a posteriori error bounds if the system is goverened by an ODE. These error estimators subsequently enable the \emph{certification}\footnote{We adopt the terminology \emph{certification} from the model reduction community, see for instance \cite{HesRS16}.} of the ML prediction, i.e., by assessing the quality of the ML prediction, and thereby open up ML to questions that must be answered with quantitative predictions. This is especially useful since no knowledge about the real solution is necessary to evaluate the error estimator. To lower the computational burden, we thereafter relax the requirement of a rigorous upper bound and derive computationally efficient error indicators by learning the error estimator with another network. Using a suitable weighting, we still obtain a smooth upper bound on the error estimator.

\paragraph*{Literature review} 
We present a rigorous and computable estimate of the prediction error of a specific instance of a neural network. This is unlike the standard literature on approximating a function within the class of neural networks, for instance, discussed in \cite{HorSW89,LesYPS93,Bar94,Pin08,LuMMK21}. In particular, our error estimator can be applied regardless of the chosen network architecture. Moreover, we do not aim to construct a neural network capable of reproducing a particular function with a prescribed tolerance, see e.g.~\cite{CaoXX08} and the references therein. An estimation of the prediction error as in our contribution is studied in \cite{CorB19,CorB19b}. Nevertheless, these are statistical approaches (in contrast to our guaranteed error bounds) and rely on sufficiently rich training data to obtain confidence intervals for the prediction error. They thus cannot be used as rigorous error bounds. Closely related to our approach is the work \cite{MinR21}, in which a dual weighted residual estimator is developed and used as a stopping criterion during the network training. Again, we emphasize that this is not a rigorous upper bound on the ML prediction error in contrast to our contribution.

\paragraph*{Organization}
In section~\ref{sec::problemSetting} we describe the problem setting to concisely derive our main result, namely the a~posteriori quantification of the ML prediction error for ODEs in section~\ref{sec::aposteriori_error}. We discuss in section~\ref{sec::appl_Pinn} how a~priori information obtained during the training of the PINN can be leveraged for the computation of the error bound. Our framework is illustrated with two numerical examples from different applications in section~\ref{sec::appl_Pinn}: one simple 1D example and the inverted pendulum as a standard optimization problem. The first example investigates and highlights the dependency of the error estimator on design decisions. The latter one demonstrates the applicability to more complex, non-linear applications, highlighting the potential relevance for model-predictive control applications as discussed in \cite{AntCSSJH21,NicKFU21}. We conclude our presentation with a discussion of our results and future research directions in section~\ref{sec::discussion}.

\paragraph*{Notation}
The real numbers and the set of positive real numbers are denoted with $\R$ and $\R_+ \vcentcolon= \{x\in \R \mid x> 0\}$, respectively.  For a complex number $\lambda\in\C$ we use the notation $\Re(\lambda)$ to denote the real part of $\lambda$.  The set of $n\times m$ matrices with real coefficients is denoted with $\R^{n\times m}$. 
For the derivative w.r.t.~to time we use interchangeably either the classical notation $\dt$ or the dot-notation prevalent in the dynamical systems literature, i.e., $\dot{g}(t) \vcentcolon= \dt g(t)$.  The derivative with respect to the variable $x = (x_1,\ldots,x_n)\in\R^n$ is denoted by $\nabla_x f (x) = \sum_{i=1}^n \tfrac{\partial}{\partial x_i} f(x) \cdot \mathbf{i}$ for $x \in \R^n$ with $\mathbf{i}\in \R^n$ being the unit vector in the $i$-th direction and $\tfrac{\partial}{\partial x_i} f$ the partial derivative of $f$ with respect to the $i$-th component of~$x$.
We use the symbol $\|\cdot\|$ to denote the Euclidean 2-norm. We emphasize that in the theoretical part, any other norm can be used. If the symbol $\|\cdot\|$ is used on a set it describes the number of elements in this set.

\section{Problem description}
\label{sec::problemSetting}

On the time interval $\timeInt = [0,\finalTime)$ with $\finalTime\in\R_+\cup\{\infty\}$ we consider the initial value problem
\begin{equation}
	\label{eq::ode_base}
	\begin{aligned}
		\dot{x}(t) &= f(t, x(t)), & t\in\timeInt \\
		x(0) &= x_0
	\end{aligned}
\end{equation}
with continuous right-hand side $f\colon \timeInt\times\R^{n}\to\R^n$ and initial value $x_0\in\R^n$. To ensure with the theorem of Picard-Lindelöf that the initial value problem has a unique solution for every initial value, we make use of  the following assumption throughout the manuscript.

\begin{assumption}
	\label{ass::Lipschitz}
	The function $f$ in~\eqref{eq::ode_base} is continuous. Besides, it is Lipschitz-continuous with respect to the second argument, i.e., there exists a constant $L>0$ such that
	\begin{displaymath}
		\|f(t,x)-f(t,y)\|\leq L\|x-y\|
	\end{displaymath}
	for all $t\in\timeInt$ and all $x,y\in\R^n$.
\end{assumption} 

With this assumption a unique solution of the initial value problem~\eqref{eq::ode_base} is guaranteed, so that we can define the flow map
\begin{equation}
	\label{eq::flow_map}
	\varphi\colon\timeInt\times \R^n \to \R^n
\end{equation}
that maps the time $t\in\timeInt$ and the initial value $x_0$ to the solution at time $t$, i.e., $x(t) = \varphi(t,x_0)$. Note that in most applications, an explicit expression for $\varphi$ is not available and evaluation of~$\varphi$ is only possible via suitable approximation techniques.

The learning process for the initial value problem is now a two-step process. First, define a suitable ML candidate function
\begin{displaymath}
	 \hat{\varphi}\colon\timeInt\times\R^n\times\R^k\to\R^n,\qquad (t,x_0,\omega)\mapsto \hat{\varphi}(t,x_0,\omega)
\end{displaymath}
with parameter vector $\omega\in\R^k$. If for instance a deep neural network is chosen, then $\hat{\varphi}$ encodes the number of layers and neurons as well as the activation function, and $\omega$ represents the weights for the network. Our standing assumption throughout the manuscript is that the candidate function $\hat{\varphi}$ is sufficiently smooth. Second, find $\omega^\star$ such that $\hat{\varphi}(\cdot,\cdot,\omega^\star)$ is a good approximation of the flow map~\eqref{eq::flow_map} by minimizing a suitable loss function $\mathcal{L}\colon\R^k\to\R$.  Defining $\hat{x} \vcentcolon= \hat{\varphi}(\cdot,x_0,\omega^\star)$, the ML prediction error is then given as
\begin{equation}
	\label{eq::ml_error}
	e(t) \vcentcolon= x(t)-\hat{x}(t)
\end{equation}
for all $t\in\timeInt$. The second step typically relies on existing data, which in our case corresponds to evaluations of $x$ on a discrete subset of $\timeInt$.

\begin{problem}
	\label{problem:certification}
	Rigorously quantify the ML prediction error~\eqref{eq::ml_error} for any $t\in\timeInt$ at low computational cost without computing the true solution.
\end{problem}

\section{A Posteriori Error Estimation}
\label{sec::aposteriori_error}
Assume that we already have trained our network and that for some initial value $x_0\in\R^n$ we have computed the machine learning approximation $\hat{x}$ of the initial value problem~\eqref{eq::ode_base}. Since our candidate function $\hat{\varphi}$ is assumed to be smooth, we can define the residual\footnote{We use the notation $\mathcal{R}_{\hat{\varphi}}(t)$ to indicate that this is the residual for the ML candidate function $\hat{\varphi}$, which additionally depends on the parameter $\omega$.} 
\begin{equation}
	\label{eq::residual}
	\mathcal{R}_{\hat{\varphi}}(t) \vcentcolon= \dot{\hat{x}}(t) - f(t,\hat{x}(t)).
\end{equation}
Herein, the time derivative may be computed efficiently via automatic differentiation, see for instance \cite{BayPRS18}.
With these preparations, we are now ready to formulate our first main result towards the solution of Problem~\ref{problem:certification}.

\begin{theorem}
	\label{thm::core_result}
	Suppose $f$ in~\eqref{eq::ode_base} satisfies Assumption~\ref{ass::Lipschitz} and the ML candidate function $\hat{\varphi}$ is sufficiently smooth. For any continuous function $\delta\colon\timeInt\to\R_+$ with
	\begin{equation}
		\label{eq::upper_bound_residual}
		\|\mathcal{R}_{\hat{\varphi}}(t)\| \leq \delta(t)
	\end{equation}
	define 
	\begin{equation*}
		I(t,\delta)\vcentcolon=\int_{0}^t \mathrm{e}^{L(t-s)}\delta(s)\ds,
	\end{equation*}
	where $L$ is the Lipschitz constant from Assumption~\ref{ass::Lipschitz}.
	Then the ML prediction error~\eqref{eq::ml_error} satisfies
	\begin{equation}
		\label{eq::fund_lemma_stat}
		|| e(t) || \leq \mathrm{e}^{L t} \|x_0-\hat{x}(0)\| + I(t,\delta).
	\end{equation}
\end{theorem}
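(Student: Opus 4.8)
The plan is to derive a differential inequality for the error $e(t)$ and then apply Gr\"onwall's lemma. Starting from the definitions, observe that $\dot{x}(t) = f(t,x(t))$ exactly, while by the definition of the residual~\eqref{eq::residual} we have $\dot{\hat{x}}(t) = f(t,\hat{x}(t)) + \mathcal{R}_{\hat{\varphi}}(t)$. Subtracting the two identities yields
\begin{displaymath}
	\dot{e}(t) = f(t,x(t)) - f(t,\hat{x}(t)) - \mathcal{R}_{\hat{\varphi}}(t).
\end{displaymath}

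Next I would pass to integral form: since $e(t) = e(0) + \int_0^t \dot{e}(s)\ds$, we obtain
\begin{displaymath}
	e(t) = e(0) + \int_0^t \bigl( f(s,x(s)) - f(s,\hat{x}(s)) \bigr)\ds - \int_0^t \mathcal{R}_{\hat{\varphi}}(s)\ds.
\end{displaymath}
Taking norms, using the triangle inequality, the Lipschitz bound from Assumption~\ref{ass::Lipschitz} on the first integrand, and the residual bound~\eqref{eq::upper_bound_residual} on the second, gives
\begin{displaymath}
	\|e(t)\| \leq \|e(0)\| + \int_0^t L \|e(s)\|\ds + \int_0^t \delta(s)\ds.
\end{displaymath}
This is a scalar integral inequality for the nonnegative function $u(t) \vcentcolon= \|e(t)\|$ of the form $u(t) \le a(t) + L\int_0^t u(s)\ds$ with $a(t) = \|e(0)\| + \int_0^t \delta(s)\ds$.

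Finally I would invoke the (generalized, integral form of) Gr\"onwall--Bellman inequality. Since $a$ is nondecreasing, the conclusion is $u(t) \le a(t)\mathrm{e}^{Lt}$; alternatively, applying the sharper version that keeps $a$ inside the integral gives directly
\begin{displaymath}
	\|e(t)\| \leq \|e(0)\|\mathrm{e}^{Lt} + \int_0^t \mathrm{e}^{L(t-s)}\delta(s)\ds = \mathrm{e}^{Lt}\|x_0 - \hat{x}(0)\| + I(t,\delta),
\end{displaymath}
which is exactly~\eqref{eq::fund_lemma_stat} after recalling $e(0) = x_0 - \hat{x}(0)$. The only mild subtlety — and the step I would be most careful about — is justifying the manipulations of the convolution term: one must either cite the version of Gr\"onwall's lemma that already produces the $\mathrm{e}^{L(t-s)}$ kernel against a time-dependent forcing $\delta$, or derive it by multiplying the differential inequality $\dot{u}(t) \le L u(t) + \delta(t)$ by the integrating factor $\mathrm{e}^{-Lt}$ and integrating; care is needed since $u = \|e\|$ is only absolutely continuous (not necessarily differentiable everywhere), so working with the integral form of the inequality throughout is the cleanest route. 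Everything else is routine.
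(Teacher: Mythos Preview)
Your argument is correct and is essentially the same as the paper's: the paper observes that $\hat{x}$ solves the perturbed initial value problem $\dot{\hat{x}}=f(t,\hat{x})+\mathcal{R}_{\hat{\varphi}}(t)$, $\hat{x}(0)=\hat{x}_0$, and then simply invokes the classical perturbation bound \cite[Ch.~I, Variant of Thm.~10.2]{HaiNW08}, whose proof is precisely the Gr\"onwall computation you carried out by hand. Your explicit derivation via the integral inequality and the $\mathrm{e}^{L(t-s)}$ kernel is exactly what that cited theorem encapsulates, so there is no substantive difference.
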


\begin{proof}
	Let $\hat{x}_0 \vcentcolon= \hat{x}(0)$. Assumption~\ref{ass::Lipschitz} together with the smoothness of $\hat{\varphi}$ implicates that $\mathcal{R}_{\hat{\varphi}}$ is continuous. Then, Assumption~\ref{ass::Lipschitz} implies that $\hat{x}$ is the unique solution of the perturbed initial value problem
	\begin{equation}
		\label{eq::ode_perturbed}
		\begin{aligned}
		\dot{\hat{x}}(t) &= f(t,\hat{x}(t)) + \mathcal{R}_{\hat{\varphi}}(t), & t\in\timeInt, \\
		\hat{x}(0) &= \hat{x}_0.
		\end{aligned}
	\end{equation}
	The result is thus an immediate consequence of \cite[Ch.~I, Variant of Thm.~10.2]{HaiNW08}.
\end{proof}

Using the smoothness of the ML candidate function~$\hat{\varphi}$ and Assumption~\ref{ass::Lipschitz}, we can always use $\delta(t) \vcentcolon= \|\mathcal{R}_{\hat{\varphi}}(t)\|$ in Theorem~\ref{thm::core_result}. However, with this choice we do not expect~$\delta$ to be continuously differentiable, which is necessary in our forthcoming Lemma~\ref{lemma::trpz_rule}. This is the main reason for presenting Theorem~\ref{thm::core_result} in its current form.

\begin{remark}
    Theorem~\ref{thm::core_result} can be easily generalized for time dependent Lipschitz parameters 
    \begin{equation*}
        || f(t, x) - f(t, y) || \le \ell(t) || x-y||
    \end{equation*}
    by replacing the exponent $Lt$ by $L(t)=\int_0^t \ell(s)\ds$ in~\eqref{eq::fund_lemma_stat}.
\end{remark}

If the differential equation in~\eqref{eq::ode_base} is linear and time-invariant, i.e., the right-hand side of~\eqref{eq::ode_base} is given by
\begin{equation}
	\label{eq::ode_linear}
	f(t,x) = Ax
\end{equation}
for some matrix $A\in\R^{n\times n}$, then we can further improve the error estimation as follows.

\begin{theorem}
	\label{thm::linear_rhs}
	Suppose that the right-hand side of~\eqref{eq::ode_base} is linear, i.e., given by~\eqref{eq::ode_linear}, and $\hat{\varphi}$ is sufficiently smooth. 
	Then there exist constants $\alpha \in\R$, $\beta\in\R_+$ such that for any continuous function $\delta\colon\timeInt\to\R$ with
	\begin{displaymath}
		\|\mathcal{R}_{\hat{\varphi}}(t)\| \leq \delta(t),
	\end{displaymath}
	the ML prediction error satisfies
	\begin{equation}
		\label{eq::improvedErrLinear}		
		\resizebox{.98\linewidth}{!}{$\begin{aligned}
		|| e(t)  || \leq \beta \left(\mathrm{e}^{\alpha t}\|x_0 -\hat{x}(0)\| + \int_{0}^t \mathrm{e}^{\alpha(t-s)}\delta(s)\ds\right).
		\end{aligned}$}
	\end{equation}
\end{theorem}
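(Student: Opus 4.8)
The plan is to exploit that, in the linear case, both the exact and the perturbed flow are available in closed form through the variation-of-constants formula, which cleanly decouples the contributions of the initial error and of the residual. Since $x(t)=\mathrm{e}^{At}x_0$ solves~\eqref{eq::ode_base} with right-hand side~\eqref{eq::ode_linear}, and since $\hat{x}$ solves the perturbed problem~\eqref{eq::ode_perturbed}, which here reads $\dot{\hat{x}}(t)=A\hat{x}(t)+\mathcal{R}_{\hat{\varphi}}(t)$, the error $e=x-\hat{x}$ satisfies the linear inhomogeneous equation $\dot{e}(t)=Ae(t)-\mathcal{R}_{\hat{\varphi}}(t)$ with $e(0)=x_0-\hat{x}(0)$. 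Duhamel's principle then yields
\begin{equation*}
e(t)=\mathrm{e}^{At}\bigl(x_0-\hat{x}(0)\bigr)-\int_0^t \mathrm{e}^{A(t-s)}\mathcal{R}_{\hat{\varphi}}(s)\ds ,
\end{equation*}
and taking norms, using submultiplicativity and compatibility of the induced matrix norm together with $\|\mathcal{R}_{\hat{\varphi}}(s)\|\le\delta(s)$, gives
\begin{equation*}
\|e(t)\|\le \|\mathrm{e}^{At}\|\,\|x_0-\hat{x}(0)\| + \int_0^t \|\mathrm{e}^{A(t-s)}\|\,\delta(s)\ds .
\end{equation*}

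It then remains to produce constants $\alpha\in\R$ and $\beta\in\R_+$ with $\|\mathrm{e}^{At}\|\le\beta\,\mathrm{e}^{\alpha t}$ for all $t\ge 0$; since $t-s\ge 0$ on the integration interval, the same bound applies to $\|\mathrm{e}^{A(t-s)}\|$, and substituting it into the last inequality and factoring out $\beta$ reproduces exactly~\eqref{eq::improvedErrLinear}. I would obtain the matrix-exponential bound from the Jordan canonical form $A=TJT^{-1}$: each Jordan block of eigenvalue $\lambda$ contributes entries of the form $p(t)\mathrm{e}^{\lambda t}$ with $\deg p$ bounded by the block size, so $\|\mathrm{e}^{At}\|\le c\,(1+t^{m})\mathrm{e}^{a t}$ with $a=\max_i\Re(\lambda_i(A))$ the spectral abscissa and $m$ one less than the largest block dimension; choosing any $\alpha>a$ (or $\alpha=a$ in the diagonalizable case) absorbs the polynomial prefactor into $\beta\,\mathrm{e}^{\alpha t}$. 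A shortcut that avoids the Jordan form altogether is to take $\alpha$ equal to the logarithmic norm $\mu(A)$ associated with the chosen vector norm, for which $\|\mathrm{e}^{At}\|\le\mathrm{e}^{\mu(A)t}$ holds with $\beta=1$; this is less sharp but immediate and already suffices for the statement.

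I expect the only real obstacle to be a presentational one — deciding how much of the estimate $\|\mathrm{e}^{At}\|\le\beta\,\mathrm{e}^{\alpha t}$ to prove versus cite — together with the minor technical point that, if one insists on the sharpest exponent (the spectral abscissa itself), the constant $\beta$ then depends on the gap $\alpha-a$ and degenerates as $\alpha\downarrow a$ whenever $A$ has a nontrivial Jordan block. Everything else is a routine application of variation of constants and norm estimates.
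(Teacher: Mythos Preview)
Your proposal is correct and follows essentially the same route as the paper: derive the linear error dynamics, apply variation of constants (Duhamel), take norms, and invoke the bound $\|\mathrm{e}^{At}\|\le\beta\,\mathrm{e}^{\alpha t}$. The only difference is that you justify this last estimate in more detail (via Jordan form or logarithmic norm), whereas the paper simply asserts it as a known fact about matrix exponentials.
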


\begin{proof} \todo{Corrected}
	Subtracting \eqref{eq::ode_perturbed} from \eqref{eq::ode_base} yields the error dynamics
	\begin{equation}
		\label{eq::ode_errorDynamics}
		\begin{aligned}
		\dot{e}(t) &= Ae(t) + \mathcal{R}_{\hat{\varphi}}(t), & t\in\timeInt, \\
		e(0) &= x_0 - \hat{x}_0.
		\end{aligned}
	\end{equation}
	Its unique solution is given by 
	\begin{displaymath}
		e(t) = \mathrm{e}^{At}(x_0-\hat{x}_0) + \int_0^t \exp(A(t-s))\mathcal{R}_{\hat{\varphi}}(s)\ds.
	\end{displaymath}
	Since for any matrix $A$ there exist constants $\alpha \in \R$, $\beta\in\R_+$ such that $\|\exp(At)\|\leq \beta \mathrm{e}^{\alpha t}$, we obtain
	\begin{equation*}
		\begin{aligned}
			\|e(t) \| &\leq  \|  \mathrm{e}^{At} (x_0-\hat{x}_0)  \| + \int_0^t \| \exp(A(t-s))\mathcal{R}_{\hat{\varphi}}(s) \|\ds \\
			&\le \beta\mathrm{e}^{\alpha t} \| x_0 -\hat{x}_0\| +  \beta\int_0^t \mathrm{e}^{\alpha(t-s)} \delta(s) \ds
		\end{aligned}
	\end{equation*}
	which concludes the proof.
\end{proof}

Let us emphasize that in many applications $\alpha$ can be chosen significantly smaller than the Lipschitz constant $L$ from Assumption~\ref{ass::Lipschitz}. Indeed, if the matrix $A$ is diagonalizable, then we can pick $\beta=1$ and $\alpha$ as the spectral abscissa, i.e., we can set
\begin{equation}
	\label{eq::spectral_abscissa}
	\alpha = \max\{\Re(\lambda) \mid \lambda \text{ eigenvalue of $A$}\}.
\end{equation}

\begin{remark}
	\label{rem::extension_PDE}
	Theorem~\ref{thm::linear_rhs} details a first step towards a generalization of the error estimator to PDEs. Assume that~\eqref{eq::ode_base} constitutes a linear abstract ODE defined on an appropriate Banach space and that $A$ in~\eqref{eq::ode_linear} is the generator of a strongly continuous semigroup (cf.~\cite{Paz83}). Then, assuming sufficiently smooth initial data,  the error estimation~\eqref{eq::improvedErrLinear} is still valid. Nevertheless, an additional approximation step is required to evaluate the norm on the underlying infinite-dimensional space. This is subject to further research.
\end{remark}

To obtain a computable error bound, we have to estimate the integral $I(t,\delta)$ in~\eqref{eq::fund_lemma_stat}, which is done here exemplarily by trapezoidal rule. More specifically, we define $\hat{I}_n(t,\delta)$ as an approximation of the integral $I(t,\delta)$ with composite trapezoidal rule with $n \in \N$ subintervals, i.e.,
\begin{equation*}
	\hat{I}_n(t,\delta) \vcentcolon= \frac{t}{2n} \mathrm{e}^{Lt}\sum_{i=0}^{n-1}  \left( \mathrm{e}^{-L \tfrac{i+1}{n}t} \delta(\tfrac{i+1}{n}t) + \mathrm{e}^{-L\cdot \tfrac{i}{n}t} \delta(\tfrac{i}{n}t) \right).
\end{equation*} 

\begin{lemma}
	\label{lemma::trpz_rule}
    Assume that the right-hand side $f$ in~\eqref{eq::ode_base} satisfies Assumption~\ref{ass::Lipschitz} and that the ML candidate function $\hat{\varphi}$ is sufficiently smooth. Let $\delta \in \mathcal{C}^3(\timeInt, \R)$ with
    \begin{equation*}
        \|\mathcal{R}_{\hat{\varphi}}(t)  \| \le \delta(t)
    \end{equation*} 
    and $\| \frac{\mathrm{d}^2}{\mathrm{d}s^2} \left( \mexp^{-Ls} \delta(s) \right)\| \le K$ for $s \in \mathbb{T}$. Then 
    \begin{equation*}
        \| \hat{x}(t) - x(t) \| \le \mexp^{Lt} \|x_0 - \hat{x}(0)\| + \hat{I}_n(t, \delta) + E_\mathrm{Int} \, \text{,} 
    \end{equation*}
    with $\hat{I}_n(t,\delta)$ is the composite trapezoidal rule approximation of $I(t,\delta)$ using $n$ subintervals. The error from the numerical integration is encapsulated in 
	\begin{displaymath}
		E_\mathrm{Int} = \mexp^{Lt}\frac{Kt^3}{12 n^2}
	\end{displaymath}
\end{lemma}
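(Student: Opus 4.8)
The plan is to obtain the statement as a direct corollary of Theorem~\ref{thm::core_result} by replacing the exact integral $I(t,\delta)$ with its composite trapezoidal approximation and controlling the incurred quadrature error with the classical trapezoidal rule estimate. Since $e(t) = x(t)-\hat{x}(t)$, we have $\|\hat{x}(t)-x(t)\| = \|e(t)\|$, so Theorem~\ref{thm::core_result} already yields
\begin{displaymath}
	\|\hat{x}(t)-x(t)\| \le \mexp^{Lt}\|x_0-\hat{x}(0)\| + I(t,\delta),
\end{displaymath}
and it remains only to estimate $I(t,\delta)$ from above by $\hat{I}_n(t,\delta)+E_{\mathrm{Int}}$.

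The first step is to factor out the $t$-dependent part of the exponential kernel: writing
\begin{displaymath}
	I(t,\delta) = \int_0^t \mexp^{L(t-s)}\delta(s)\ds = \mexp^{Lt}\int_0^t g(s)\ds, \qquad g(s)\vcentcolon= \mexp^{-Ls}\delta(s),
\end{displaymath}
reduces the problem to quadrature of the single function $g$ on $[0,t]$. Under the hypotheses ($\delta\in\mathcal{C}^3(\timeInt,\R)$ and $f$ Lipschitz so that $L$ is well defined), $g$ is in particular twice continuously differentiable on $\timeInt$, and the assumption $\|\tfrac{\mathrm{d}^2}{\mathrm{d}s^2}(\mexp^{-Ls}\delta(s))\|\le K$ on $\timeInt$ gives a uniform bound $\|g''(s)\|\le K$ on $[0,t]$.

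The second step is the identification $\hat{I}_n(t,\delta) = \mexp^{Lt}\,T_n(g)$, where $T_n(g)$ is the composite trapezoidal rule for $\int_0^t g(s)\ds$ with $n$ equidistant subintervals of width $h = t/n$; this is simply a matter of matching the displayed definition of $\hat{I}_n$ term by term with the standard composite trapezoidal formula. Invoking the classical error bound for the composite trapezoidal rule,
\begin{displaymath}
	\left| \int_0^t g(s)\ds - T_n(g) \right| \le \frac{t\,h^2}{12}\,\max_{s\in[0,t]}\|g''(s)\| \le \frac{Kt^3}{12 n^2},
\end{displaymath}
and multiplying by $\mexp^{Lt}$, we obtain $|I(t,\delta)-\hat{I}_n(t,\delta)| \le \mexp^{Lt}\tfrac{Kt^3}{12n^2} = E_{\mathrm{Int}}$, hence in particular $I(t,\delta)\le \hat{I}_n(t,\delta)+E_{\mathrm{Int}}$. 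Substituting this into the bound from Theorem~\ref{thm::core_result} concludes the proof.

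There is no substantial obstacle: the lemma is essentially a corollary of Theorem~\ref{thm::core_result} combined with a textbook quadrature estimate. The only points requiring care are bookkeeping ones — pulling the factor $\mexp^{Lt}$ out of the kernel correctly so that the trapezoidal estimate applies to the $t$-independent integrand $g$, and using the \emph{one-sided} consequence $I(t,\delta)\le\hat{I}_n(t,\delta)+E_{\mathrm{Int}}$ of the absolute-value quadrature bound so that the resulting inequality remains an upper bound on $\|\hat{x}(t)-x(t)\|$. (Note that $\mathcal{C}^2$ regularity of $\delta$ already suffices for the argument; the stated $\mathcal{C}^3$ hypothesis is only a convenient strengthening.)
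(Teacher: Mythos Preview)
Your proposal is correct and follows essentially the same approach as the paper: you factor out $\mexp^{Lt}$ to reduce to quadrature of $g(s)=\mexp^{-Ls}\delta(s)$ (the paper writes this as $J(t,\delta)=I(t,\delta)\cdot\mexp^{-Lt}$), apply the standard composite trapezoidal error bound, and rescale by $\mexp^{Lt}$. Your write-up is more detailed than the paper's one-line sketch, but the strategy and ingredients are identical.
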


\begin{proof}
	Follows directly from the application of composite trapezoidal rule for the integral $J(t, \delta) = I(t, \delta) \cdot e^{-Lt}$ and applying known error bounds \cite{Gau97} to $\hat{J}_n(t,\delta)$ and rescaling the result by multiplication with $e^{Lt}$. 
\end{proof}

Lemma~\ref{lemma::trpz_rule} requires a sufficiently smooth upper limit for the norm of the residual. 
To achieve this, we follow \cite{RamSKA14} and construct a smooth upper limit for $\|\mathcal{R}_{\hat{\varphi}}(t)\|$ as
\begin{equation}
	\label{eq::delta_construction}
	\delta(t) \vcentcolon= \sqrt{\|\mathcal{R}_{\hat{\varphi}}(t)\|^2 +  \mu^2}
\end{equation}
with $\mu\in\R$. Due to the numerical experiments (details can be found in section~\ref{subsec::Appl_1DODE}), choosing $\mu \vcentcolon= \tfrac{1}{10}\overline{\| \mathcal{R_{\hat{\varphi}}} \|}$ has shown to be a suitable candidate. Therein, we use the average deviation of fulfilling the ODE over the set of collocation points $Y_\mathrm{coll}$ defined as
\begin{equation*}
	\overline{\| \mathcal{R}_{\hat{\varphi}} \|} \vcentcolon= \frac{1}{\|Y_\mathrm{coll}\|} \sum_{y \in Y_\mathrm{coll}}\| \mathcal{R}_{\hat{\varphi}}(y.t) \|,
\end{equation*}
wherein $y.t$ denotes the time, which is part of collocation point $y$.
\section{Application for Physics Informed Neural Networks}
\label{sec::appl_Pinn}

The main idea of physics-informed ML, see for instance~\cite{RaiPK19}, is to encode physical information into the ML framework. In PINNs this is achieved by adding an additional term to the loss function which measures how well the ML candidate function satisfies the governing ODE~\eqref{eq::ode_base}. Here, we showcase that this additional term in the loss function can be leveraged for the error estimation.

Suppose we have training data $(t,x_0,\varphi(t,x_0))\in \R^{2n+1}$ aggregated in a finite training data set $Z_{\mathrm{data}}$. For notational convenience we introduce the notation 
\begin{gather*}
	\begin{aligned}
	z.t &\vcentcolon= z_1 \\
	z.x_0 &\vcentcolon= (z_2,..., z_{n+1})\\
	z.x &\vcentcolon= (z_{n+2},..., z_{2n+1}).
	\end{aligned}
\end{gather*}
for any $z = (z_1,\ldots,z_{2n+1})\in Z_\mathrm{data}$.
The contribution to the loss function, which describes how well a training data set is reproduced by the NN, is defined as 
\begin{equation}
	\label{eq::Ldata}
	\mathcal{L}_\mathrm{data}(\omega) = \frac{1}{\|Z_\mathrm{data}\|} \sum_{z \in Z_\mathrm{data}} \| \hat{\varphi}(z.t, z.x_0, \omega) - z.x \|^2.
\end{equation}
For the second contribution, we define a set of collocation points $(t, x_0) \in Y_\mathrm{coll}$ for which the expected output value $x(t) = \varphi(t,x_0)$ is unknown. Their contribution to the loss is defined via
\begin{equation}
	\label{eq::Lphys}
	\begin{aligned}
		\mathcal{L}_\text{phys}(\omega) &= \frac{1}{\|Y_\mathrm{coll}\|} \sum_{y \in Y_\mathrm{coll}} \eta(y.t)\| \mathcal{R}_{\hat{\varphi}}(y.t) \|^2,
	\end{aligned}
\end{equation}
where $\mathcal{R}_{\hat{\varphi}}$ denotes the residual as defined in~\eqref{eq::residual}. Two remarks are in order. First, we emphasize that the residual implicitly depends on the ML parameter $\omega$ and on the initial value~$z.x_0$. Second, given the error estimators derived in Theorems~\ref{thm::core_result} and~\ref{thm::linear_rhs},  we have added an additional weighting term $\eta(t)$ in~\eqref{eq::Lphys}. The choice $\eta\equiv 1$ recovers the classical loss function as introduced in~\cite{RaiPK19}. 
The total loss function used for the training is then given by a suitable linear combination of the loss functions~\eqref{eq::Ldata} and~\eqref{eq::Lphys} with non-negative weights $\gamma_\mathrm{data}, \gamma_\mathrm{phys}$, i.e.,
\begin{equation*}
    \mathcal{L}(\omega) = \gamma_\mathrm{data} \cdot \mathcal{L}_\mathrm{data}(\omega) +\gamma_\mathrm{phys} \cdot \mathcal{L}_\mathrm{phys}(\omega).
\end{equation*}

\subsection{A priori estimates of characteristic quantities}

We immediately notice that although our error estimator is of a posteriori nature, we use the residual information a priori during the training of our PINN. Moreover, we can use the collocations points for an a priori estimate of the constants $L$ and $K$ required in Lemma~\ref{lemma::trpz_rule}. 

The Lipschitz constant $L$ can be approximated by means of automatic differentiation. In more detail, we  evaluate $\nabla_x f(t, x)$
for all collocation points and use singular value decomposition to extract the growth of f~w.r.t.~x at these points. The Lipschitz constant $L$ is then immediately estimated as the maximum of the determined largest singular values. However, this procedure partly reflects the selection of the collocation points. A too sparse usage of collocation points for the determination of $L$ might result in underestimating the parameter and thus invalidating the error estimation. 

To estimate $K$ in Lemma~\ref{lemma::trpz_rule}, i.e., when the trapezoidal rule is used to approximate the integral, we derive auxiliary parameters first. The expected machine learning error at time $t$ can be estimated a priori as
\begin{equation*} 
	E_{\mathrm{ML}}^{\mathrm{exp}}(t, x_0) = \mexp^{L t} \|x_0 - \hat{x}(0)\| + (\mexp^{L t} - 1 ) \frac{\overline{\| \mathcal{R}_{\hat{\varphi}} \|}}{L} .
\end{equation*}
The contribution of the numerical integration error to the total error can now be bounded by requiring that
\begin{equation} 
	\label{eq::contr_E_I}
	E_\mathrm{Int}(t, x_0) \le \varepsilon E_{\mathrm{ML}}^{\mathrm{exp}}(t, x_0).
\end{equation}
for fixed $\varepsilon>0$ . We then estimate the number of required subintervals for the trapezoidal rule as 
\begin{equation} 
	\label{eq::N_SI}
	N_\mathrm{SI}(t, x_0) = \left\lceil \sqrt{\frac{ \mathrm{e}^{Lt} K t^3}{12 E_{\mathrm{ML}}^{\mathrm{exp}}(t, x_0) \varepsilon}}\;\right\rceil,
\end{equation} 
which might be corrected by a posteriori determination of $K$ and~$E_\mathrm{ML}^{\mathrm{exp}}$.	

\subsection{Learning the error estimator}
\label{subsec::nnforerrors}

One of the main benefits of using PINNs for dynamical systems is to avoid tedious and computationally expensive integration over time. However, the introduced error estimator requires detailed knowledge of the temporal behaviour of $\delta$ and thereafter integration of the found function. 

To minimize the computational burden, we propose to learn an error indicator that mimics the error estimator but does not require evaluations of $\delta$ at multiple time points. This means, that we now learn two neural networks, a PINN, which reflects the temporal behaviour of the target system, and another purely data-based NN, which returns the error indicator. In the following, the latter one is referred to as error NN. 

To train the error NN as a purely data-based NN we require an adequate data set consisting of input data and reference data. For this purpose, we randomly generated $N_\mathrm{gen}$ input data points $(t, x_0) \in Y_{\mathrm{gen}}$ within the allowed range and evaluated the PINN. Based on PINNs output data, the value of the a posteriori error estimator $E_\mathrm{pred}$ is determined using conventional integration via trapezoidal rule. This is depicted as the grey path in Fig.~\ref{fig::error_network_training}. The generated data points in $Y_{\mathrm{gen}}$ and the computed $E_\mathrm{pred}$ are then used as input and reference data to train the error neural network, as shown in blue in Fig.~\ref{fig::error_network_training}.

\begin{figure}
	\centering
	\begin{tikzpicture}
		\node (Xdata) {$Y_{\mathrm{gen}}$};
		\node[rectangle, minimum width=2.2cm, fill=black!30, right=0.5cm of Xdata] (PINN) {PINN};
		\draw[-latex,shorten >= 2pt, shorten <= 2pt] (Xdata) -- (PINN);
		\node[right=0.5cmof PINN] (Yres) {$\hat{\varphi}(Y_{\mathrm{coll}})$};
		\draw[shorten >= 2pt, shorten <= 2pt] (Yres) -- (PINN);
		\node[rectangle,minimum width=2.2cm, fill=black!30, right=0.5cm of Yres] (EImpl) {Lemma~\ref{lemma::trpz_rule}};
		\draw[-latex,shorten >= 2pt, shorten <= 2pt] (Yres) -- (EImpl);
		\node[below=0.5cm of EImpl] (Epred) {$E_{\mathrm{pred}}$};
		\node[rectangle, minimum width=2.2cm,fill=color0, below=0.5cm of Epred] (ENN) {Error NN};
		\draw[color0,shorten >= 2pt, shorten <= 2pt] (Epred) -- (EImpl);
		\draw[color0, -latex,shorten >= 2pt, shorten <= 2pt] (Epred) -- (ENN);
		\draw[color0, -latex,shorten >= 4pt, shorten <= 2pt] (Xdata) -- (ENN);
		\node (train) at (6,-1.5) {\textcolor{color0}{train}};
	\end{tikzpicture}
	\caption{Training sequence for the error neural network based on conventional evaluations of the contributions of Lemma~\ref{lemma::trpz_rule} (grey node in the top right corner). The two neural networks, the one reflecting the evolution of the system over time and the other one describing the error are called PINN and Error NN, respectively.}
	\label{fig::error_network_training}
\end{figure}
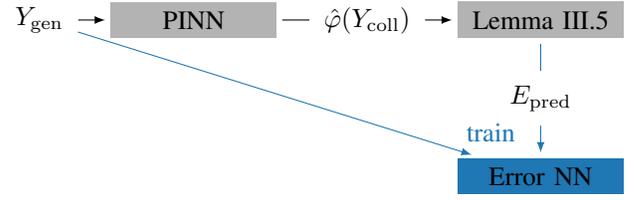

\begin{remark}
	If the underlying dynamic is linear and time-invariant, i.e., as in the setting of Theorem~\ref{thm::linear_rhs}, then instead of learning the error indicator, we could also use a PINN approach to directly learn the error dynamics~\eqref{eq::ode_errorDynamics}. This should be closer to the actual prediction error since we do not introduce any additional approximation steps. Nevertheless, a thorough investigation is subject to further research. 
\end{remark}

\section{Numerical examples}
\label{subsec::numex}

We study our ML certification framework with two simple numerical examples. The first one is a scalar, linear ODE and the second one, is the inverted pendulum on a cart. For the implementation we use the Python framework TensorFlow\footnote{https://www.tensorflow.org/}. For a thorough investigation of the predicted a posteriori error estimator and its contributions we decompose the predicted error as follows
\begin{displaymath}
	\|  x(t) - \hat{x}(t) \| \le E_\mathrm{init}(t) + E_\mathrm{PI}(t) \, ,
\end{displaymath}
wherein the contribution of the error in the initial value to the total error at time $t$ is called 
\begin{displaymath}
	E_\mathrm{init}(t) \vcentcolon= \mathrm{e}^{Lt} \|\hat{x}_0 - x_0 \|.
\end{displaymath}	
The remainder, i.e., the deviation introduced by approximating the ODE is 
\begin{displaymath}
	E_\mathrm{PI}(t) =  \int_0^t \mathrm{e}^{L(t-s)} \delta (s) \ds.
\end{displaymath}	

If we learn the error by a neural network (cf.~subsec.~\ref{subsec::nnforerrors}), the neural network predicted error is denoted with~$E_\mathrm{NN}$.

\vspace{0.2cm}
\noindent\fbox{%
    \parbox{0.48\textwidth}{%
        The code and data used to generate the subsequent results are accessible via
		\begin{center}
			\ifthenelse{\boolean{peerReview}}{%
				\emph{to be inserted after double-blind review}
			}{doi: 10.5281/zenodo.6557796}
		\end{center}
		under MIT Common License.
    }%
}
\vspace{0.2cm}

\subsection{Application for 1D-ODE with fixed initial conditions}
\label{subsec::Appl_1DODE}

The considered problem is 
\begin{equation}
	\label{eq::exampleprob}
	\begin{aligned}
    \dot{x}(t) &= -2x(t) \\
    x(0) &= 2
	\end{aligned}
\end{equation}
with the known solution $x(t) = 2 \mathrm{e}^{-2t}$. This can be learned by physics-informed ML with a relatively low number of epochs. Note that for simplicity, we do not vary the initial value and hence only have to learn the mapping $t \mapsto 2\mathrm{e}^{-2t}$.

For our numerical investigation we use a neural network with one node in input and output layer, respectively, and two hidden layers with 4 nodes each. The neural network is implemented with the hyperbolic tangent as activation function. The network is trained with 200 collocation points and 5000 epochs using the TensorFlow optimizer \texttt{adam}.

For the smooth upper limit of the absolute value of the residual, we use the construction as in~\eqref{eq::delta_construction}, i.e.,
\begin{equation}
	\label{eq::smoothening}
    \delta(t) = \sqrt{ \left(\dot{x} + 2 x \right)^2 + \mu}
\end{equation}
with $\mu = \tfrac{1}{10}\overline{\| \mathcal{R} \|}$. As mentioned previously, for an adequate determination of $\mu$, one has to consider the dependency of $K$ on $\mu$. In Fig.~\ref{fig::Kovermu}, this dependency is depicted. For a small choice of $\mu$, the contribution of the first and second derivative of $\delta(t)$ dominates $K$, whereas for larger $\mu$, the function $\delta(t)$ becomes smoother but larger in absolute value. 

\begin{figure}
	\centering
	\includegraphics[width=\linewidth]{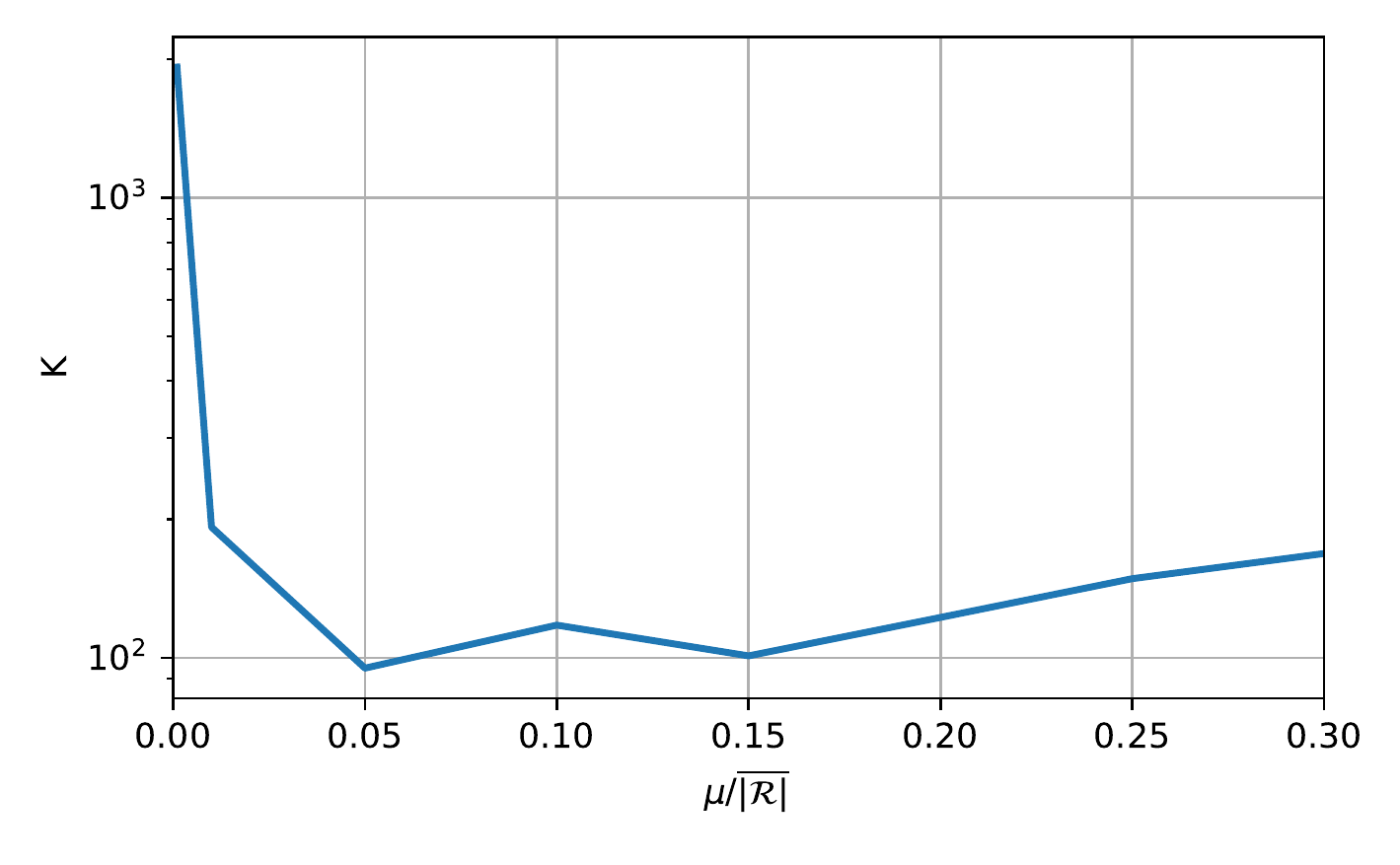}
	\caption{Dependency of computed $K$ on selection of $\mu$ for a neural network with tanh activation function. }
	\label{fig::Kovermu}
\end{figure}

The contribution of the error of the numeric computation of the integral is limited a priori by $\epsilon = 0.33$ (see~\eqref{eq::contr_E_I}). 

\begin{figure}
    \centering
    \includegraphics[width=\linewidth]{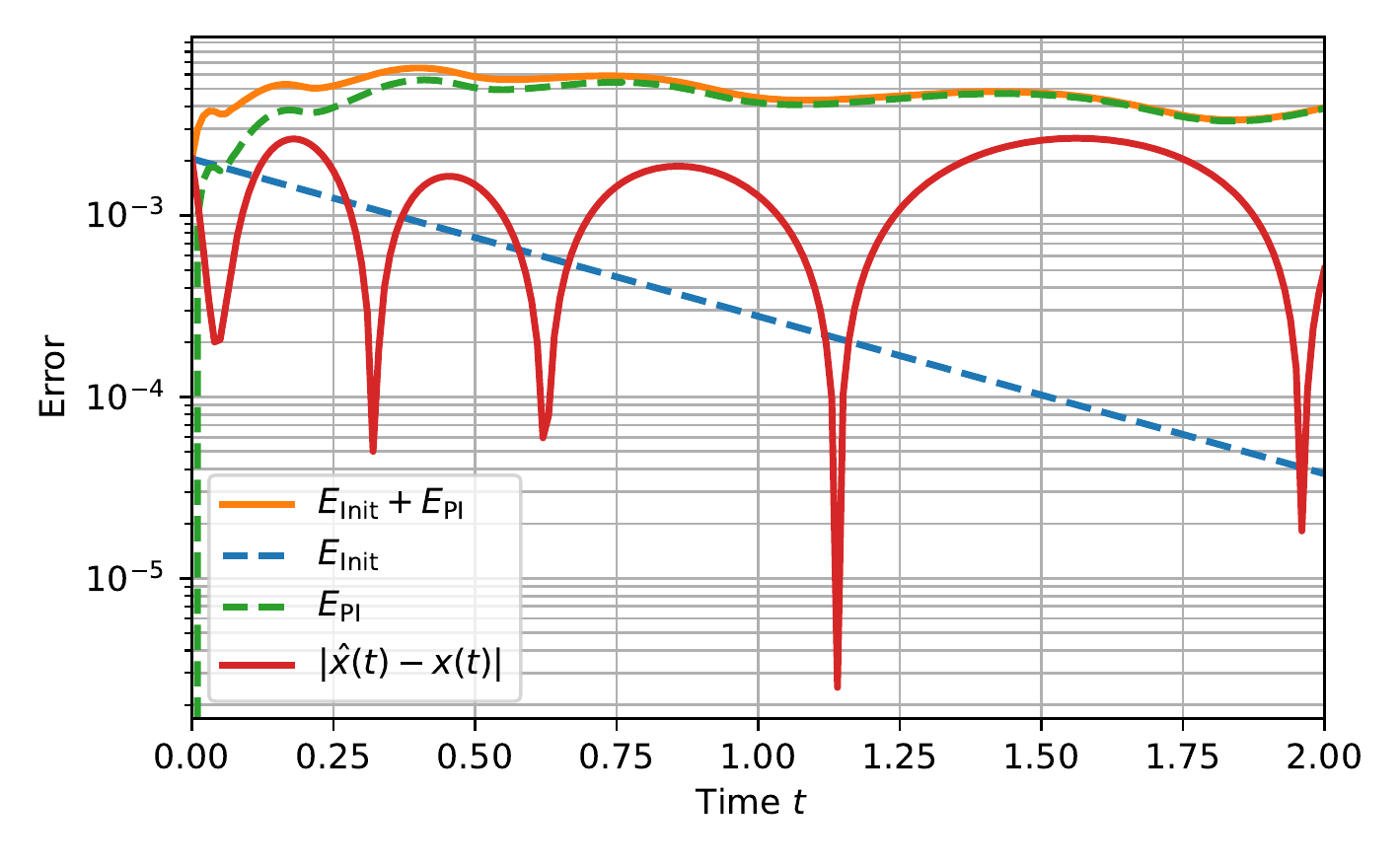}
    \caption{Actual absolute error of the learned solution (red line) and the predicted error by a posteriori error estimation (orange line). The first data point of $E_\mathrm{PI}$ (green dashed line) is out of scope of the figure because it is evaluated to zero at time zero. This is reasonable, since the fulfillment of the ODE plays no role if no time has passed.  \label{fig::abs_vs_pred}}
\end{figure}

Since we are in a linear regime, we can use Theorem~\ref{thm::linear_rhs} instead of Theorem~\ref{thm::core_result}. In more detail, we can use $\alpha=-2$ and $\beta=1$ instead of $L=2$. This significantly improves the error estimation. The predicted and the absolute error are depicted in Fig.~\ref{fig::abs_vs_pred}. Clearly, our estimator is a strict upper bound on the actual error with only a small overestimation. We observe that the contribution of the error in the initial value, i.e., $E_{\mathrm{init}}$ is only significant for very small $t$. Towards the end of the considered time interval, the error introduced by the initial value error is almost negligible. 

\begin{remark}
	When choosing $\eta(t)$ to enhance the importance of small or large times during training, better results might be achieved w.r.t. two quality measures: firstly, the overall approximation error of the PINN might be smaller than without weighting, and secondly, the a posteriori error estimator might be closer to the actual error. These preliminary observations hint at the need for a thorough investigation of a weighting function $\eta(t)$ in the context of certified machine learning and the introduced a posteriori error estimator.
\end{remark}

The number of necessary subintervals $N_\mathrm{SI}$ as defined in~\eqref{eq::N_SI} amounts to approximately 250 for the maximum time in the allowed timerange $t=2$. This highlights the added value by learning an error indicator as proposed in subsection~\ref{subsec::nnforerrors}, which is trained and applied for the 1D example in the following. For the training, we use random times in the allowed domain~$\timeInt$ to generate training data for the neural network representing the error. In the following, training sets consisting of 100, 1000, or 10000 generated data points are used to train a neural network with 2-8 layers with each 4-16 neurons. We use the hyperbolic tangent as activation function and L-BFGS \cite{LiuN89} with 20000 epochs as optimizer. 
Learning the error in this particular case shows to be challenging. Although the magnitude is found easily during learning, even for small NNs (2 layers, 4 neurons) and small learning sets (100 random time points), the detailed evolution of the error can not be represented that easily. All tested combinations of the size of the data set, the number of layers, and the number of neurons showed similar results for the test data as shown in Fig.~\ref{fig::ENN_weightedOrNot}.

Especially in this magnitude, it is more important to preserve the property that the error estimator is an upper limit for the actual error while not increasing the magnitude significantly than to reflect the detailed behavior of the error. This is why we introduced additional weighting during the training of the error NN and penalized an underestimation of the error 
\begin{displaymath}
	E_\mathrm{NN} < E_\mathrm{Init} + E_\mathrm{PI}
\end{displaymath}
stronger (by a factor 1000) than an overestimation. This leads to the neural network modelling a smooth wrapper around the actual error, as shown in Fig.~\ref{fig::ENN_weightedOrNot}.

\begin{figure}
	\centering
	\includegraphics[width=\linewidth]{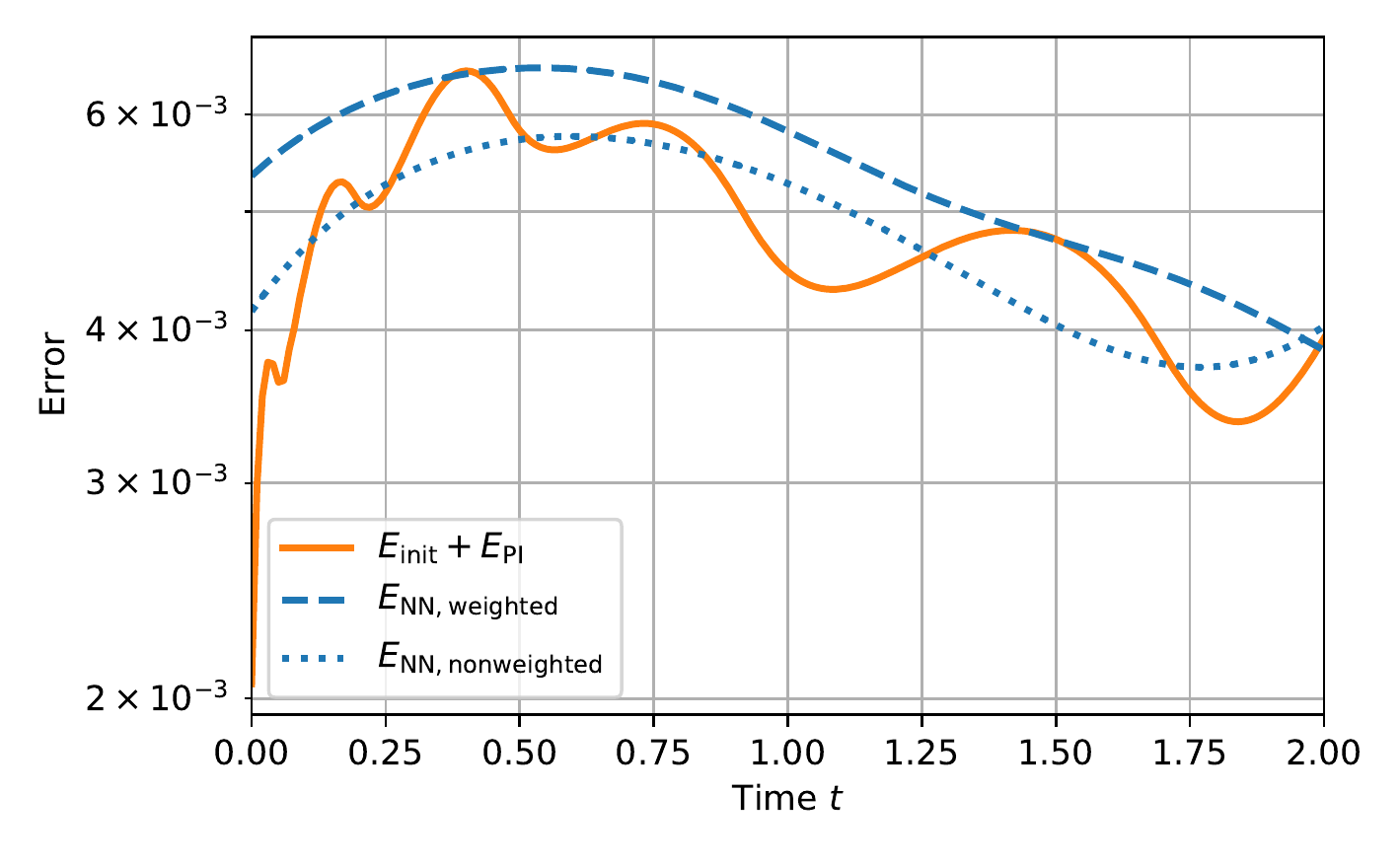}
	\caption{Predicted error with the a posteriori error estimator evaluated by numerical integration (orange solid line) in contrast to error prediction of the neural network (blue lines) trained with artificially generated data (data set size 100). Here, a neural network with 2 layers and 4 neurons in each layer is used. The difference between the two depicted curves is, whether underestimating the error was penalized more than overestimating it. In case of $E_\mathrm{NN, weighted}$ (blue dashed line) it was penalized by multiplying the loss with 1000 for underestimation, while this was 1 for $E_\mathrm{NN, notweighted}$ (blue dotted line).} \label{fig::ENN_weightedOrNot}
\end{figure}

\begin{remark}
	We emphasize that we could also investigate the toy example with an explicit dependency on the initial value. This would introduce an additional input neuron and thus complicate the approximation task slightly. Nevertheless, the computation of the error bound and hence the certification of the approximation can be carried out as presented above. A more realistic example with a varying initial value is discussed next. 
\end{remark}

\subsection{Application for 4D-ODE with dependence on initial conditions}
\label{subsec::InvPendulum}
As a slightly more complex example, we selected the benchmark problem of the inverted pendulum. The control problem to stabilize the upright position has been solved early with neural networks \cite{And89,HunSZG92} and is here used to show the applicability of the error estimator to one of the standard problems governed by ODEs. 

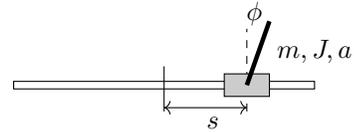
\begin{figure}
	\centering
	\begin{tikzpicture}
		\draw[draw=black] (-2,0) rectangle (2,0.1);
		\draw[] (0,-0.3) -- (0,0.3);
		\draw[dashed] (1.1,-0.3) -- (1.1,0.8);
		\draw[draw=black, fill=black!20] (0.8, -0.1) rectangle ++(0.6,0.3);
		\draw[<->] (0, -0.25) -- (1.1, -0.25);
		\node (s) at (0.65, -0.45) {$s$} ;
		\draw[line width=0.6mm, draw=black] (1.1, 0.05) -- (1.4, 0.9);
		\node (phi) at (1.2, 1) {$\phi$} ;
		\node (p) at (2, 0.5) {$m, J, a$} ;
	\end{tikzpicture}
	\caption{Inverted pendulum as described by \eqref{eq::ODE_inv_pendulum} with the variables of interest being the angle of the pendulum $\phi$ and displacement of the cart~$s$. The parameters $m= 0.3553\, \mathrm{kg}$, $a = 0.42\,\mathrm{m}$, and $J= 0.0361\, \mathrm{kg}/\mathrm{m}^2$ denote the (point) mass of the cart, the distance between the center of mass of the cart and the pendulum $a$, and the mass moment of inertia, respectively.}
	\label{fig::pendulum}
\end{figure}

The mathematical pendulum on a cart is formulated as first-order ODE as 
\begin{equation}
	\label{eq::ODE_inv_pendulum}
	\dt \begin{bmatrix}
		\phi \\
		\dot{\phi} \\
		s \\
		\dot{s}
	\end{bmatrix} =
	\begin{bmatrix}
		\dot{\phi} \\
		\frac{m \mathsf{g} a \sin(\phi) - d \dot{\phi} + ma \cos(\phi) u}{J+ m a^2} \\
		\dot{s} \\
		u
	\end{bmatrix},
\end{equation}
wherein the variables and parameters are used as depicted in Fig.~\ref{fig::pendulum}. Futhermore, $\mathsf{g} = 9.81 \, \mathrm{m}/\mathrm{s}^2$ denotes the gravitational constant and $d = 0.005\, \mathrm{Nms}$ the friction coefficient. The parameter $u$ describes the acceleration of the cart, which we use as the control element. 

The neural network has thereby a 4-dimensional output and a 6-dimensional input consisting of the time $t$, the acceleration of the cart $u$ and the 4-dimensional initial value. As activation function we use $\tanh$ and for the network topology we choose 4 layers with 32 neurons each. For this more complex example we use 100.000 epochs of the L-BFGS optimizer \cite{LiuN89}. We use 10.000 randomly generated collocation points to assert conformance to the ODE and 50 data points. The 10.000 randomly generated collocation points are generated in the allowed domain for the 6-dimensional input data, ranging for 
\begin{align*}
	t &\in \mathbb{T} = [0,0.1]\, \mathrm{s},\\
	x_0 &\in [-\pi, \pi] \times [-6, 6]\,{1}/{\mathrm{s}} \times [-1,1]\, \mathrm{m} \times [-3,3] \,\mathrm{m}/\mathrm{s},\\
	u &\in [-15,15]\,\mathrm{m}/\mathrm{s}^2.
\end{align*}

\begin{figure}
	\centering
	\includegraphics[width=0.97\linewidth]{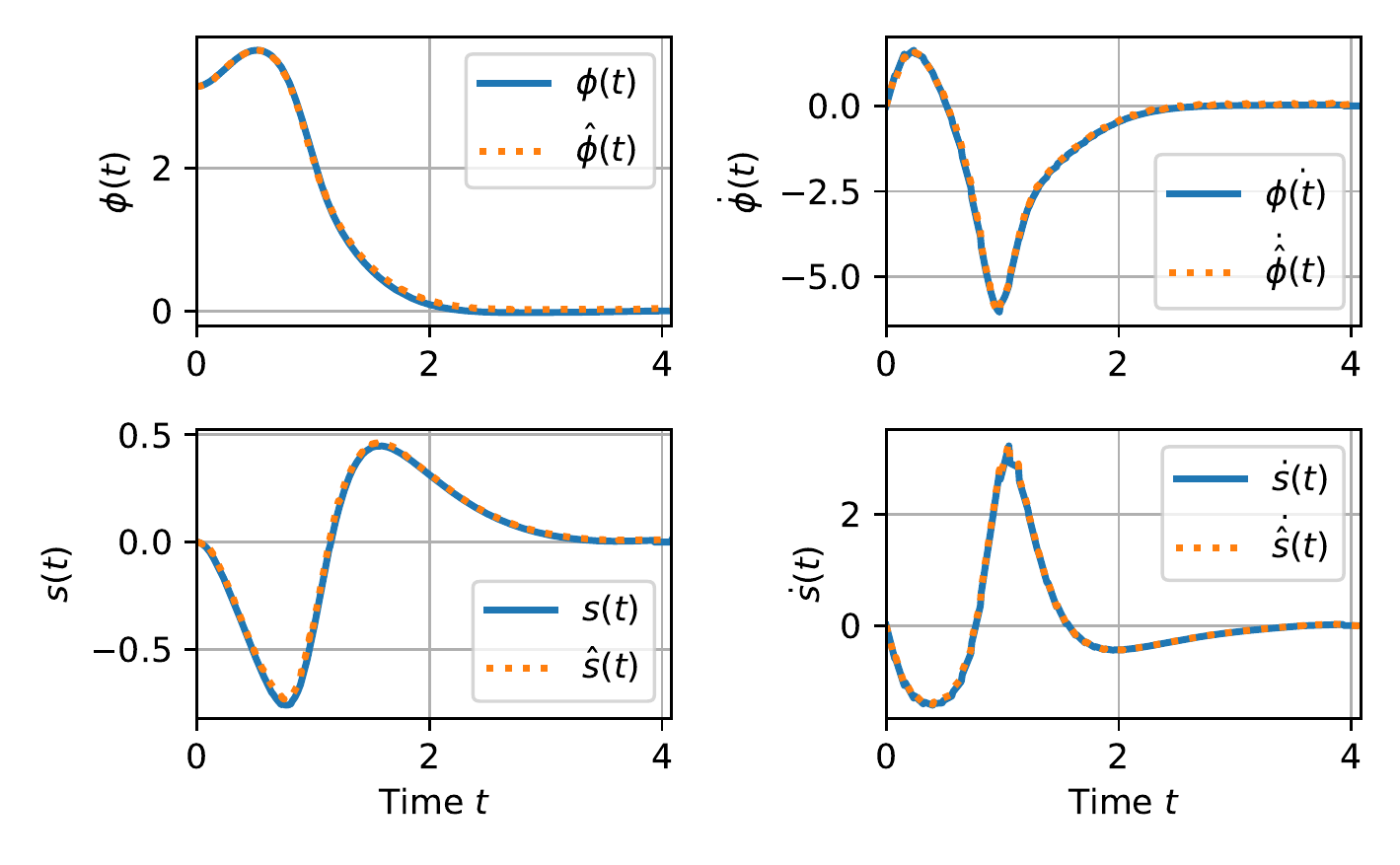}
	\caption{Comparison of the solution generated by the PINN (orange dashed line) with the reference solution of the inverse pendulum dynamics~\eqref{eq::ODE_inv_pendulum} generated by a fine grained forward Euler (blue solid line).}
	\label{fig::phi_s_inv_pendulum}
\end{figure}

Similarly as in model-predictive control, we split the time interval $[0,4]\,\mathrm{s}$ into $50$ control intervals and use a constant external accelleration $u$ of the cart within each subinterval. The control $u$ and the inital values for the control intervals $x_0$ are determined a priori using full discretization. In consequence, we restrict the investigation of our a posteriori error estimator to the investigation of the temporal evolution of the system and the error within each time interval. 
The learned solution agrees well with the reference solution computed with the explicit (forward) Euler-Method with 100 grid points per control interval (see Fig.~\ref{fig::phi_s_inv_pendulum}). 
To evaluate the performance of the error estimation, the first control intervals are shown in Fig.~\ref{fig::first200_abserr}. For the complete investigated interval, the actual error $\|x(t) - \hat{x}(t)\|$ between the reference and the PINN approximated solution of \eqref{eq::ODE_inv_pendulum}, is less than the a posteriori error estimator computed via \ref{eq::fund_lemma_stat}. It is worth noticing that the error estimator is dominated by the error of the initial conditions $E_\mathrm{Init} = \mathrm{e}^{Lt} \|x_0 -\hat{x}(0) \| $. Even though the contributions of $E_\mathrm{PI} = \mathrm{e}^{Lt}(I(t) + E_{I}(t))$ become significant for four control intervals, the ODE is sufficiently fulfilled to keep this error notably smaller than the contribution of the error in the initial conditions. 

\begin{figure}
	\centering
	\includegraphics[width=0.97\linewidth]{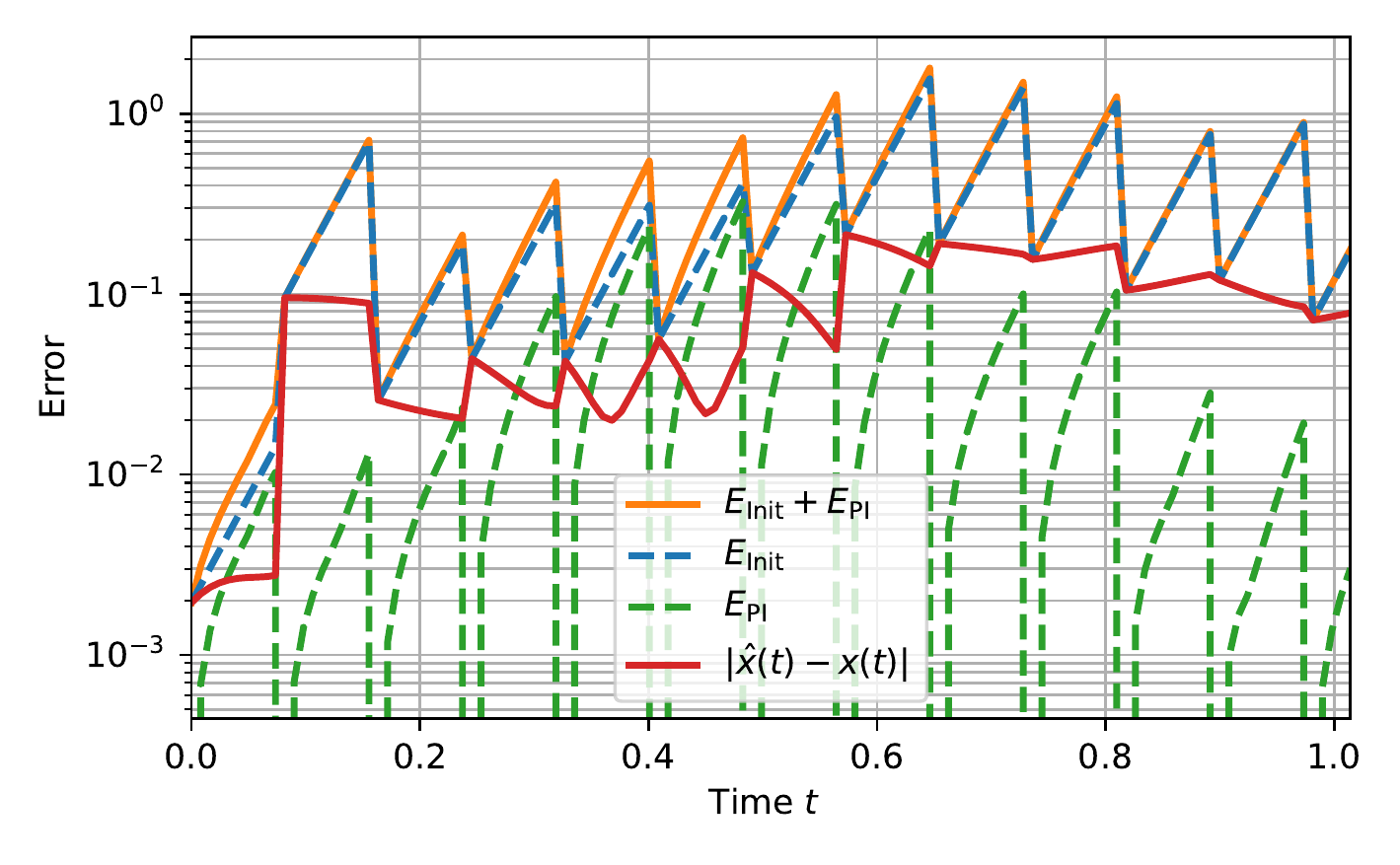}
	\caption{Error contributions for the first time unit of simulation and for comparison the absolute error (red line) between reference solution $x(t)$ and PINN based solution $\hat{x}(t)$.}
	\label{fig::first200_abserr}
\end{figure}

As in the 1D example we train the error NN to represent the error estimator. We used a set of 25.000 generated training points $Y_\mathrm{gen}$ in the allowed range (as described previously). The data-driven error NN was parametrized to have 8 layers with each 32 neurons utilizing tanh activation function and L-BFGS optimizer \cite{LiuN89}. The training was performed over 100.000 epochs. In Fig.~\ref{fig::error_nn_pendulum}, the error estimator according to Theorem~\ref{thm::core_result} is shown as a solid orange line, whereas the output of the error NN is shown as dashed blue line. In this case, there was no necessity to penalize underestimation additionally, rather slight overestimation of the error remained visible. This overestimation is considered acceptable, since it mostly preserves the magnitude. Especially for the strongly non-linear regime, in which the first time unit lies, a more dense data set would be required to train the error NN to retrieve more accurate approximations of the error. 

\begin{figure}
	\centering
	\includegraphics[width=0.97\linewidth]{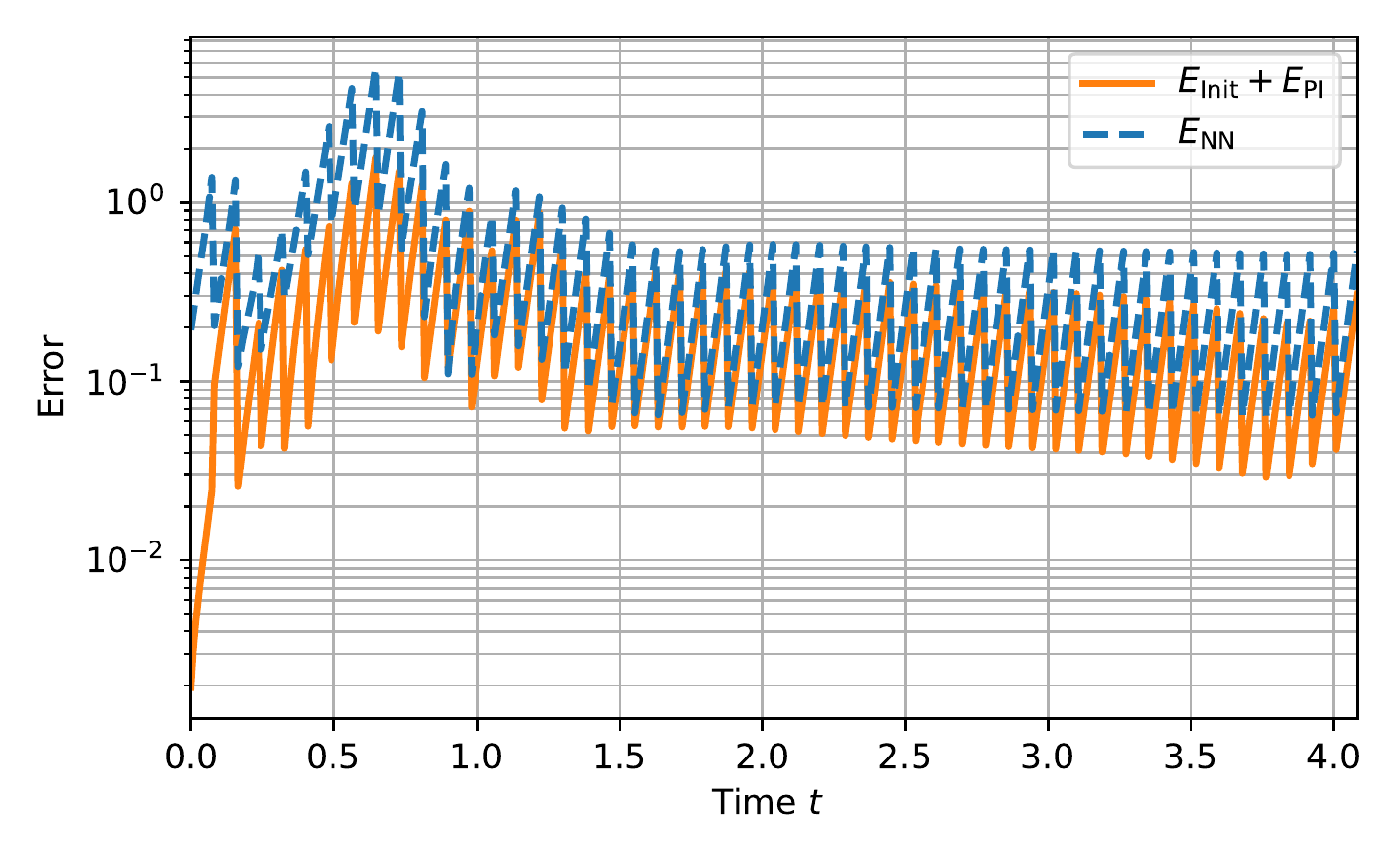}
	\caption{A posteriori error estimator for the test data set for the inverted pendulum (orange solid line) in comparison to the error indicator given by the error NN (blue dashed line). }
	\label{fig::error_nn_pendulum}
\end{figure}

Similar to the 1D example~\eqref{eq::exampleprob}, learning the error has shown to be challenging, which is reflected by the NN size required to retrieve good results. The scaling of the network was systematically increased from 2 layers with 4 neurons each to the size, of which the results were presented here. As Fig.~\ref{fig::error_nn_pendulum} suggests, there is a high regularity in the error estimator for, e.g., the dependence on time between control points, which is easily learned. However, the magnitude here poses a problem and is not appropriately reflected when using smaller networks.

\begin{remark}
	In the previous examples, results were presented which use the hyperbolic tangent as the activation function. However, the choice of the activation function does not affect the validity or the applicability of the estimator. Tests with other activation functions such as gaussian error linear unit (gelu), softmax, or sigmoid-weighted linear units (silu) showed similar results.
	The same holds for the choice of the optimizer. 
\end{remark}

\section{Discussion and Outlook}
\label{sec::discussion}
We have shown that for the approximation of ODEs, an a~posteriori certification of machine learning methods using reliable error estimators is possible . These error estimators are particularly useful since no knowledge about the true solution is needed. 

For linear systems, we derive a strong error bound, which has shown to be close in magnitude to the actual error by the 1D toy example presented in subsec.~\ref{subsec::Appl_1DODE}. For non-linear systems, the Lipschitz constant of the right-hand side of the ODE dominates the evolution of the predicted error over time. This has consequences on the practical use of this methodology. For systems with large Lipschitz constants, it might be necessary to introduce observation points to interrupt error aggregation and restart the error prediction. 

Many questions remain open after this initial investigation. Beginning with the basics of a posteriori error estimation \cite{Ver94}, a thorough investigation of the possibility to give lower bounds on the error estimator could show the applicability of the error estimators. Without an adequate lower bound, the risk remains that we significantly overestimate the error, as detailed in subsec~\ref{subsec::InvPendulum}.

As already observed in \cite{JagKK20} the choice of the activation function plays a more significant role for physics-informed machine learning than for conventional data-driven methods. It would be a valuable extension to systematically evaluate the impact of the choice of activation functions, hyperparameters and the introduction of optional hyperparameter (e.g., as done in \cite{JagKK20}) on the presented methodology. Moreover, the error estimators suggest introducing an additional weighting parameter into the physics-inspired part of the loss function, which subsequently allows for balancing the error contribution of the initial condition and the ODE residual. A systematic choice of $\eta$ is thus promising for the certification and may further improve the prediction ability of the neural network.

In addition to open questions about the applicability of these error bounds to ODE-defined problems, extensions to PDE-governed systems are essential (cf.~Remark~\ref{rem::extension_PDE}). Only this subsequent step makes a posteriori error estimators available to the majority of systems, which PINNs could address.

\bibliographystyle{IEEEtran}
\bibliography{literature}

\enddocument